\newcommand{\x}{\times}
\renewcommand{\o}{\phantom{\times}}
\newtheorem{thm}  {Theorem} [section]
\newtheorem{prop} [thm] {Proposition}
\newtheorem{lem} [thm] {Lemma}
\theoremstyle{definition} 
\newtheorem{nts} [thm] {Notation}
\newtheorem{dfn} [thm] {Definition}
\numberwithin{equation}{section}
\crefname{thm}{Theorem}{Theorems}
\crefname{lem}{Lemma}{Lemmas}
\crefname{prop}{Proposition}{Propositions}
\crefname{nts}{Notation}{Notations}
\crefname{dfn}{Definition}{Definitions}
\crefname{fig}{Figure}{Figures}
\crefname{tab}{Table}{Tables}
\DeclarePairedDelimiterX\set[1]\lbrace\rbrace{\,\def\given{\mid}#1\,} 
\DeclarePairedDelimiterX\ip[1][]{{#1}}
\DeclarePairedDelimiterX\fp[1]\{\}{{#1}}
\DeclarePairedDelimiterX\size[1]\lvert\rvert{{#1}}
\newcommand{\ind}{\ensuremath{\bm{1}}}
\newcommand{\MM}{\ensuremath{\bm{M}}}
\newcommand{\GG}{\ensuremath{\bm{G}}}
\newcommand{\II}{\ensuremath{\bm{I}}}
\newcommand{\KK}{\ensuremath{\bm{K}}}
\renewcommand{\P}{P}
\newcommand{\BB}[1]{\mathfrak{B}(#1)}
\newcommand{\E}[1]{E(#1)}
\newcommand{\dist}{\sim}
\newcommand{\dK}{\kappa}
\newcommand{\formal}{\textup{(}formal\textup{)}}
\begin{document}

\title[On concepts of random contexts]{On formal concepts of random formal contexts}
\author[T.\ Sakurai]{\href{https://orcid.org/0000-0003-0608-1852}{Taro Sakurai}}
\address{Department of Mathematics and Informatics, Graduate School of Science, Chiba University, 1-33, Yayoi-cho, Inage-ku, Chiba-shi, Chiba, 263-8522 Japan}
\email{tsakurai@math.s.chiba-u.ac.jp}

\keywords{%
	asymptotic lower bound, %
	average case analysis, %
	formal concept analysis, %
	formal concepts, %
	random formal contexts%
}
\subjclass[2010]{%
	\href{https://zbmath.org/classification/?q=68T30}{68T30} (%
	\href{https://zbmath.org/classification/?q=06B99}{06B99},
	\href{https://zbmath.org/classification/?q=05C80}{05C80},
	\href{https://zbmath.org/classification/?q=60C05}{60C05})}
\date{\today}
\begin{abstract}
	\noindent 
	In formal concept analysis,
	it is well-known that the number of formal concepts can be exponential in the worst case.
	To analyze the average case, we introduce a probabilistic model for random formal contexts and
	prove that the average number of formal concepts has a superpolynomial asymptotic lower bound.
\end{abstract}

\maketitle

\setcounter{tocdepth}{1}
\tableofcontents

\section{Introduction}
\noindent 
How many formal concepts does a formal context have?
This is one of the fundamental problems in the theory of \emph{formal concept analysis}---%
an application area of lattice theory which originates from Wille~\cite{Wil82} to support \emph{data analysis} and \emph{knowledge processing}.
In the graph-theoretic language, the problem asks the number of maximal bicliques of bipartite graphs.
The problem of determining the number of formal concepts is proved to be \#P-complete by Kuznetsov \cite[Theorem 1]{Kuz01}.
Even though the counting problem is hard in general,
it is of interest to get a general idea of how large the number is.

It is well-known that
the number of formal concepts can be exponential in the worst case,
and it can be one in the best case.
Such extremal formal contexts are obtained from contranomial scales and formal contexts defined by the empty relation.
Since these examples appear to be highly atypical,
it is natural to study the number of formal concepts in the \emph{average} case.
To this end, we introduce random formal contexts (\cref{dfn:rc})
and present an exact formula for the average number of formal concepts (\cref{prop:avg}).
Lastly,
we prove that the average number of formal concepts has a \emph{superpolynomial} asymptotic lower bound (\cref{thm:main}),
which is the main result of this article.
Our theorem and its proof help to understand why a ``typical'' formal context has numerous formal concepts.

\section{Preliminaries}
\subsection{Formal concept analysis}
We recall basic notions in formal concept analysis which can be found in the textbook by Ganter and Wille~\cite[Chapter~1]{GW99}.
A \emph{{\formal} context} is defined to be a triple \( K = (G, M, I) \) consists of
two sets \( G \), \( M \), and a subset \( I \) of \( G \times M \).
An element \( g \) of \( G \) is called an \emph{object},
an element \( m \) of \( M \) is called an \emph{attribute},
and \( I \) is called the \emph{incidence relation} of the context \( K \).
An object \( g \) is said to \emph{have} an attribute \( m \) if a pair \( (g, m) \) belongs to \( I \).
A context is often represented by a \emph{cross table}
whose rows and columns are indexed by objects and attributes,
and the incidence relation is indicated by crosses as in \cref{fig:cross}.
\begin{figure}[h]
	\begin{tikzpicture}
		\matrix[
			matrix of math nodes,
			row sep=.5ex,
			column sep=.5ex,
			left delimiter=., right delimiter=. ,
			nodes={text width=.75em, text height=1.75ex, text depth=.5ex, align=center}
			] (m)
			{
			\o & \o & \o & m  & \o & \o \\
			\o & \o & \o & \o & \o & \o \\
			\o & \o & \o & \o & \o & \o \\
			g  & \o & \o & \x & \o & \o \\
			\o & \o & \o & \o & \o & \o \\
			\o & \o & \o & \o & \o & \o \\
			};
			\draw (m-6-2.south west) rectangle (m-2-6.north east);
			\begin{scope}[on background layer]
				\node[fit=(m-4-2)(m-4-6), draw=MediumBlue!5, fill=MediumBlue!5, rounded corners] {};
				\node[fit=(m-2-4)(m-6-4), draw=MediumBlue!5, fill=MediumBlue!5, rounded corners] {};
				\node[fit=(m-4-4), fill=MediumBlue!10] {};
			\end{scope}
	\end{tikzpicture}
	\caption{The cross table of a context.}
	\label{fig:cross}
\end{figure}
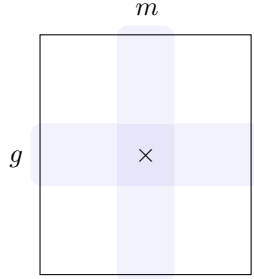

Let \( A \) be a set of objects and let \( B \) be a set of attributes.
The set of attributes that all objects in \( A \) have in common is denoted by
\begin{align*}
	A' &= \bigcap_{g \in A} \set{ m \in M \given (g, m) \in I }.  \\
	\intertext{Similarly, the set of objects that have all attributes in \( B \) is denoted by}
	B' &= \bigcap_{m \in B} \set{ g \in G \given (g, m) \in I }.
\end{align*}
A pair \( (A, B) \) is defined to be a \emph{{\formal} concept} if \( A' = B \) and \( B' = A \);
the first and second components are called the \emph{extent} and \emph{intent} of the concept.
The set of concepts of a context \( K \) is denoted by \( \BB{K} \).

\subsection{Asymptotic analysis}
We recall two useful notations in asymptotic analysis: the \emph{little-oh notation} and the \emph{Vinogradov notation}.
Let \( (x_n) \) and \( (y_n) \) be real sequences.
For an arbitrary positive real number \( \varepsilon \), if
\( \size{x_n} < \varepsilon \size{y_n} \)
for sufficiently large \( n \), then we write \( x_n = o(y_n) \).
If there is some positive real number \( \gamma \) satisfying
\( \size{x_n} \le \gamma \size{y_n} \)
for sufficiently large \( n \), then we write \( y_n \gg x_n \).

\section{Random contexts}
In this section, we introduce a probabilistic model for random contexts.
Although we provide its measure-theoretic formalization later for completeness,
the randomness we consider might be best described by the following informal manner.

Let \( n \) be a positive integer and take an \( n \)-set, say \( U = \{1, 2, \dotsc, n \} \).
For each element of \( U \),
we regard it as an \emph{object} with probability \( p \) and as an \emph{attribute} with probability \( 1 - p \), independently.
Subsequently, for each pair \( (g, m) \) of an object \( g \) and an attribute \( m \),
we regard an object \( g \) \emph{has} an attribute \( m \) with probability \( q \),
independently.
We add that the probabilities \( p \) and \( q \) are not necessarily constants like \( p = 1/2 \) and may be functions of \( n \) like \( q = 1 - 1/n \).

A similar probabilistic model with a fixed number of objects and attributes is used by Kov\'acs in \cite[\S2.1]{Kov18} to estimate the number of concepts.
Those who familiar with random graph theory would instantly recognize that this is very much alike to the model for binomial random graphs~\cite[p.~2]{JLR00},
which is also known as the Erd\H{o}s-R\'enyi model.
In this article, we content ourselves with this simplest model for random contexts.
The readers may wish to skim through the next notation and definition
if they are comfortable with this informal description of our probabilistic model.

Throughout this article, we use a convention to write random variables in bold.
For basic concepts of probability theory, we refer the readers to a work by Bauer~\cite[Chapter~I]{Bau96}, for example.

\begin{nts}
	Let \( n \) be a positive integer and let \( p \) and \( q \) be real numbers belonging to the unit interval \( [0, 1] \).
	Set \( U = \{1, 2, \dotsc, n\} \).
	Write \( \Omega \) for the set of contexts \( (G, M, I) \) with \( G + M = U \) where \( + \) denotes the disjoint union.
	Define the probability measure \( \P = \dK_{n, p, q} \) on the power set \( 2^\Omega \) by
	\begin{equation*}
		\P\{(G, M, I)\} = p^{\size{G}}(1 - p)^{\size{M}} \, q^{\size{I}}(1 - q)^{\size{G \times M - I}}.
	\end{equation*}
\end{nts}

The probability space \( (\Omega, 2^\Omega, \P) \) is our mathematical model for random contexts.

\begin{dfn}
	\label{dfn:rc}
	We call an \( \Omega \)-valued random variable \( \KK \) a \emph{random context} and write \( \KK \dist \dK_{n, p, q} \)
	if the distribution of \( \KK \) equals \( \dK_{n, p, q} \).
\end{dfn}

For a real-valued function \( f \) on \( \Omega \) and a random context \( \KK \),
we write
\begin{equation*}
	\E{f \circ \KK} = \int f \, d\P = \sum_{K \in \Omega} f(K) \P\{ K \}
\end{equation*}
for the expectation.

\section{Average number of concepts}
Based on the notion of random contexts that is introduced in the previous section,
we show an exact formula for the average number of concepts in this section.
\begin{prop}
	\label{prop:avg}
	Let \( \KK \) be a random context with \( \KK \dist \dK_{n, p, q} \).
	Then
	\begin{equation}
		\E{\size{\BB{\KK}}} = \sum_{(a, b, c, d)} \binom{n}{a \; b \; c \; d} \, p^{a + c}(1 - p)^{b + d} \, q^{ab}(1 - q^a)^d(1 - q^b)^c
		\label{eq:ebk}
	\end{equation}
	where the sum is taken over all non-negative integers with \( a + b + c + d = n \).
\end{prop}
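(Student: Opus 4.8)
The plan is to expand \( \size{\BB{\KK}} \) as a sum of indicator random variables indexed by the potential concepts, apply linearity of expectation, and evaluate each resulting probability using the product structure of \( \dK_{n,p,q} \).

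First I would fix the indexing. Writing \( \KK = (\GG, \MM, \II) \), a {\formal} concept \( (A, B) \) of \( \KK \) is determined by its extent \( A \), because \( B = A' \); and \( A \), \( B \) sit inside \( \GG \), \( \MM \) with complements \( C = \GG \setminus A \) and \( D = \MM \setminus B \). So concepts of \( \KK \) correspond bijectively to ordered quadruples \( (A, B, C, D) \) of pairwise disjoint subsets of \( U \) with \( A \cup B \cup C \cup D = U \) for which the event \( E_{A,B,C,D} \) holds, namely that \( \GG = A \cup C \), \( \MM = B \cup D \), and \( (A, B) \in \BB{\KK} \). This gives \( \size{\BB{\KK}} = \sum \ind_{E_{A,B,C,D}} \) and hence \( \E{\size{\BB{\KK}}} = \sum \P(E_{A,B,C,D}) \), both sums over all such quadruples.

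Next I would evaluate \( \P(E_{A,B,C,D}) \) for a quadruple of sizes \( (a,b,c,d) \). The event factors into the partition step and the incidence step, which are independent under \( \dK_{n,p,q} \). The partition step requires the elements of \( A \cup C \) to become objects and those of \( B \cup D \) attributes, with probability \( p^{a+c}(1-p)^{b+d} \). Given this, the entries of \( \II \) across the four blocks \( A \times B \), \( A \times D \), \( C \times B \), \( C \times D \) are mutually independent, each present with probability \( q \); unwinding the derivation operators (with the convention that an empty intersection is the ambient set), ``\( (A, B) \) is a concept'' becomes: every pair of \( A \times B \) lies in \( \II \); each column \( m \in D \) has some row \( g \in A \) with \( (g,m) \notin \II \); and each row \( g \in C \) has some column \( m \in B \) with \( (g,m) \notin \II \). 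These constrain the first three blocks separately (the fourth is free), so \( \P(E_{A,B,C,D}) = p^{a+c}(1-p)^{b+d}\,q^{ab}(1-q^a)^d(1-q^b)^c \).

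Finally, since this probability depends only on \( (a,b,c,d) \) and the number of quadruples of given sizes is \( \binom{n}{a\;b\;c\;d} \), grouping the sum over all \( (a,b,c,d) \) with \( a+b+c+d = n \) gives \eqref{eq:ebk}. I expect the only delicate point to be the degenerate cases \( a = 0 \) or \( b = 0 \), where the derivation operators return all of \( \MM \) or all of \( \GG \); one verifies directly that the formula still agrees, since \( (1-q^0)^d = 0^d \) correctly forces \( d = 0 \) when \( a = 0 \) (and symmetrically \( c = 0 \) when \( b = 0 \)), which is exactly the concept condition there. Beyond this bookkeeping the argument is routine, the probability computation being immediate from the independence built into the model.
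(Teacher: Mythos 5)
Your proposal is correct and follows essentially the same route as the paper: indicator variables for candidate concepts, linearity of expectation, a block-by-block evaluation of the probability using the independence built into \( \dK_{n, p, q} \), and grouping quadruples by their sizes to obtain the multinomial coefficient. The only cosmetic difference is that you index directly by ordered partitions \( (A, B, C, D) \) where the paper first sums over pairs \( (A, B) \) and then splits via the law of total probability; your explicit check of the degenerate cases \( a = 0 \) or \( b = 0 \) is a welcome detail the paper leaves implicit.
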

\begin{proof}
	Set \( \KK = (\GG, \MM, \II) \).
	Let \( A \) and \( B \) be subsets of \( U \).
	We write \( \ind_{\{ (A, B) \in \BB{\KK} \}} \) for the indicator variable of an event that a pair \( (A, B) \) is a concept of \( \KK \).
	By the linearity of expectation and the law of total probability, we may reduce the problem as
	\begin{align*}
		\E{\size{\BB{\KK}}}
		&= \sum_{(A, B)} \E{\ind_{\{ (A, B) \in \BB{\KK} \}}}
		= \sum_{(A, B)} \P\{ (A, B) \in \BB{\KK} \} \\
		&= \sum_{(A, B, C, D)} \P(\{ (A, B) \in \BB{\KK} \} \cap \{ \GG = A + C \} \cap \{ \MM = B + D \})
	\end{align*}
	where the sums are taken over all tuples of subsets of \( U \).
	Suppose that \( (A, B, C, D) \) is an ordered partition of the set \( U \).
	From the reduction, it is enough to show that
	\begin{multline*}
		\quad \P(\{ (A, B) \in \BB{\KK} \} \cap \{ \GG = A + C \} \cap \{ \MM = B + D \}) \hfill \\
		\hfill{} = p^{\size{A + C}}(1 - p)^{\size{B + D}} \, q^{\size{A \times B}}(1 - q^{\size{A}})^{\size{D}}(1 - q^{\size{B}})^{\size{C}}. \quad
	\end{multline*} 
	The cross table of a context in \cref{fig:prob} may help the readers to see why this claim holds.
	\begin{figure}[h]
		\begin{tikzpicture}
			\matrix[
				matrix of math nodes,
				row sep=.5ex,
				column sep=.5ex,
				left delimiter=., right delimiter=. ,
				nodes={text width=.75em, text height=1.75ex, text depth=.5ex, align=center}
				] (m)
				{
				\o & \o & B      & \o & \o & D      & \o \\
				\o & \o & \o     & \o & \o & \o     & \o \\
				A  & \o & \x     & \o & \o & \vdots & \o \\
				\o & \o & \o     & \o & \o & \o     & \o \\
				\o & \o & \o     & \o & \o & \o     & \o \\
				C  & \o & \cdots & \o & \o & \ast   & \o \\
				\o & \o & \o     & \o & \o & \o     & \o \\
				};
				\draw[dotted] (m-4-2.south west) -- (m-4-7.south east);
				\draw[dotted] (m-2-4.north east) -- (m-7-4.south east);
				\draw (m-7-2.south west) rectangle (m-2-7.north east);
				\begin{scope}[on background layer]
					\node[fit=(m-3-3), fill=MediumBlue!5, rounded corners] {};
					\node[fit=(m-6-2)(m-6-4), fill=MediumBlue!5, rounded corners] {};
					\node[fit=(m-2-6)(m-4-6), fill=MediumBlue!5, rounded corners] {};
					\node[fit=(m-6-6), fill=MediumBlue!5, rounded corners] {};
				\end{scope}
		\end{tikzpicture}
		\caption{When \( \{ (A, B) \in \BB{\KK} \} \cap \{ \GG = A + C \} \cap \{ \MM = B + D \} \) occurs.}
		\label{fig:prob}
	\end{figure}
	First, every element of \( A + C \) must belong to \( \GG \) with probability \( p^{\size{A + C}} \) (row header), and
	every element of \( B + D \) must belong to \( \MM \) with probability \( (1 - p)^{\size{B + D}} \) (column header).
	Second, every pair of \( A \times B \) must belong to \( \II \) with probability \( q^{\size{A \times B}} \) (upper-left corner).
	Next, every attribute in \( D \) must not be shared by all objects in \( A \) with probability \( (1 - q^{\size{A}})^{\size{D}} \) (upper-right corner), and
	every object in \( C \) must not have all attributes in \( B \) with probability \( (1 - q^{\size{B}})^{\size{C}} \) (lower-left corner).
	Last, the rest entries (lower-right corner) do not affect the occurrence of the event.
	The above argument establishes the claim and completes the proof.
\end{proof}

\section{Asymptotic lower bound}
In this section,
we study random contexts with constant probabilities \( p = q = 1/2 \) in detail and
prove that the average number of concepts has a superpolynomial asymptotic lower bound.
The following is the main result of this article.

\begin{thm}
	\label{thm:main}
	Let \( (\KK_n) \) be a sequence of random contexts with \( \KK_n \dist \dK_{n, \frac{1}{2}, \frac{1}{2}} \).
	Then
	\begin{equation*}
		\E{\size{\BB{\KK_n}}} > n^{\log n}
	\end{equation*}
	for sufficiently large \( n \).
	In particular, \( \E{\size{\BB{\KK_n}}} \gg n^{\log n} \).
\end{thm}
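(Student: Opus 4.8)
The plan is to extract a single, cleverly chosen family of terms from the exact formula \eqref{eq:ebk} that is already large enough to dominate $n^{\log n}$, and discard the rest by positivity. With $p = q = 1/2$, the summand for a partition $(a,b,c,d)$ of $n$ becomes
\begin{equation*}
	\binom{n}{a\;b\;c\;d}\,2^{-n}\,2^{-ab}\,(1-2^{-a})^d(1-2^{-b})^c.
\end{equation*}
The natural choice is to keep only terms with $c = d = 0$, i.e.\ partitions $n = a + b$; then the last two factors equal $1$, the multinomial coefficient collapses to $\binom{n}{a}$, and the contribution is $2^{-n}\sum_{a=0}^{n}\binom{n}{a}2^{-ab} = 2^{-n}\sum_{a=0}^{n}\binom{n}{a}2^{-a(n-a)}$. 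So it suffices to show $\sum_{a}\binom{n}{a}2^{-a(n-a)} > 2^{n} n^{\log n}$ for large $n$, and for that a single well-chosen value of $a$ will do.

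The key step is choosing $a$ to balance the binomial coefficient $\binom{n}{a}$ against the penalty $2^{-a(n-a)}$. Taking $a \approx \alpha \log_2 n$ for a suitable constant $\alpha$, we have $2^{-a(n-a)} \approx 2^{-an} = n^{-\alpha n}$ (up to lower-order corrections from the $+a^2$ term, which is $O((\log n)^2)$ in the exponent and hence negligible). Meanwhile $\binom{n}{a} \ge (n/a)^{a} = \bigl(n/(\alpha\log_2 n)\bigr)^{\alpha\log_2 n}$, whose logarithm is $\alpha(\log_2 n)\bigl(\log n - \log(\alpha\log_2 n)\bigr) = \alpha (\log n)(\log_2 n) (1 + o(1))$. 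Comparing with the target: $\log(2^{n} n^{\log n}) = n\log 2 + (\log n)^2$. The penalty term contributes $-\alpha n\log n \cdot (1+o(1))$ to the log of the summand, which already overwhelms $n\log 2$ in absolute value — so in fact I have the comparison backwards and must be more careful: the dominant term $2^{-a(n-a)}$ with $a$ of order $\log n$ gives roughly $n^{-\alpha n}$, which is \emph{far too small}. This forces the opposite regime.

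Reconsidering: the product $\binom{n}{a}2^{-a(n-a)}$ is maximized not near the ends but where the two logarithmic derivatives balance; since $\log\binom{n}{a} \approx a\log(n/a)$ grows only like $a\log n$ while $a(n-a)$ grows like $an$ for small $a$, the penalty always wins unless $a$ (or $b = n-a$) is bounded. So the right move is $a = O(1)$ — say $a$ a slowly growing function or even a constant like $a = \log_2 n$ is still wrong; we need $n - a$ small too, but it can't be. The resolution is that with $a$ \emph{fixed}, $2^{-a(n-a)} = 2^{-an+a^2}$ decays exponentially in $n$ — still too small against $2^{-n}$ only when $a = 1$, giving $2^{-n}\binom{n}{1}2^{-(n-1)} = n\,2^{-2n+1}$, hopeless. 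Therefore the $c=d=0$ slice is \emph{not} the right slice, and the main obstacle is precisely this: one must retain terms with $c, d$ large, where the factors $(1-2^{-a})^d$ and $(1-2^{-b})^c$ are close to $1$ (requiring $a, b$ to be at least moderately large, say $\ge \log_2\log n$) while $q^{ab}$ is kept harmless (requiring $ab$ small, hence one of $a, b$ bounded). The genuine plan is thus: fix $a$ to be a slowly growing function such as $a = \lceil \log_2 n\rceil$, set $b$ likewise moderate, and let $c + d = n - a - b$ absorb the bulk; then $(1-2^{-a})^d \ge (1 - 1/n)^n \to e^{-1}$, the factor $q^{ab} = 2^{-(\log_2 n)^2}$ contributes $-(\log n)^2/\log 2$ to the exponent, and $\binom{n}{a\;b\;c\;d}$ with $c \approx d \approx n/2$ is of order $2^{n}/\mathrm{poly}$; the net $2^{-n}$ cancels, leaving $\exp\bigl(-(\log n)^2/\log 2 - O(1) - O(\log n)\bigr)$ times a polynomial — which is $n^{-\Theta(\log n)}$, still short by the sign.

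Given the sign keeps failing on the "obvious" slices, the honest assessment is that the inequality must come from \emph{summing} a polynomially long or exponentially long run of comparable terms, not from one term; the main obstacle is identifying a window of partitions $(a,b,c,d)$ — plausibly $a = b$ ranging over an interval around $\log_2 n$, with $c, d$ balanced — over which each summand is at least $n^{\log n}/\mathrm{poly}(n)$ after the $2^{-n}$ and $2^{-ab}$ penalties are offset by a super-exponential multinomial coefficient $\binom{n}{a\;b\;c\;d}$ that beats $2^n$ by a factor like $2^{n}/\binom{n}{a,a}^{-1}\cdots$; I expect the author pins down one explicit choice, bounds the multinomial coefficient from below by Stirling, bounds $(1-2^{-a})^d$ and $(1-2^{-b})^c$ below by a constant, and checks $-ab\log 2 + \log\binom{n}{a\;b\;c\;d} - n\log 2 > (\log n)^2$ for that choice. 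The routine part is the Stirling estimate; the creative part — and the crux of the whole theorem — is the selection of $(a,b,c,d)$, which I would determine by writing the exponent as a function of the free parameters and optimizing.
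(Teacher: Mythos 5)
Your proposal does not reach a proof: it ends by conjecturing what the author ``probably'' does, after your own computation led you to reject the very approach that works. The concrete error is in your treatment of the slice \( a \approx b \approx \log_2 n \), \( c \approx d \approx n/2 \), where you assert that the multinomial coefficient \( \binom{n}{a\;b\;c\;d} \) is ``of order \( 2^n/\mathrm{poly} \)''. It is not. Writing
\begin{equation*}
	\binom{n}{a\;b\;c\;d} = \binom{n}{a}\binom{n-a}{b}\binom{n-a-b}{c},
\end{equation*}
the last factor is indeed \( 2^{\,n-a-b}/\mathrm{poly}(n) \) when \( c = d \), but the first two factors each have logarithm \( a\log(n/a) + O(a) = \frac{\log^2 n}{\log 2}\left(1+o(1)\right) \). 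Hence the multinomial coefficient exceeds \( 2^n \) by a factor of \( \exp\bigl(2\frac{\log^2 n}{\log 2} + o(\log^2 n)\bigr) \), a superpolynomial surplus; this is precisely the content of \cref{lem:mult}. With that correction your own bookkeeping closes the argument using a \emph{single} term: the surplus \( 2\log^2 n/\log 2 \) beats the penalty \( ab\log 2 = \frac{\log^2 n}{\log 2}(1+o(1)) \) coming from \( q^{ab} \), the factors \( (1-2^{-a})^d(1-2^{-b})^c \) cost only \( O(1) \) (as you correctly observed, and as \cref{lem:bdd} confirms), and the net exponent is \( \frac{\log^2 n}{\log 2}\left(1+o(1)\right) > \log^2 n \) for large \( n \) because \( \log 2 < 1 \).

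Consequently your ``honest assessment'' that the bound must come from summing a long run of comparable terms is mistaken: the paper proves the theorem by singling out exactly one term \( t_n \) of \eqref{eq:ebk}, with \( a_n = \ip{\log n/\log 2} \), \( b_n = a_n + 2(n/2 - \ip{n/2}) \), and \( c_n = d_n = \ip{n/2} - a_n \), and showing \( \log t_n = \frac{\log^2 n}{\log 2}\left(1+o(1)\right) \) via Stirling (\cref{lem:equiv}). Your final sentence does guess this structure, but a guess preceded by a computation that contradicts it is not a proof; the gap sits exactly at the lower bound for the multinomial coefficient.
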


For a real number \( x \), the integer part and fractional part of \( x \) are denoted by \( \ip{x} \) and \( \fp{x} \).
To obtain a lower bound for the average number of concepts of \( \KK_n \),
we \emph{single out} the specific term
\begin{align}
	\label{eq:term}
	t_n &= \binom{n}{a_n \; b_n \; c_n \; d_n} \, p^{a_n + c_n}(1 - p)^{b_n + d_n} \, q^{a_nb_n}(1 - q^{a_n})^{d_n}(1 - q^{b_n})^{c_n}
\end{align}
in \eqref{eq:ebk} for constant probabilities \( p = q = 1/2 \) where
\begin{align}\begin{split}
	\label{eq:abcd}
	a_n &= \ip[\bigg]{\frac{\log n}{\log 2}}, \qquad
	b_n = \ip[\bigg]{\frac{\log n}{\log 2}} + 2\fp[\bigg]{\frac{n}{2}}, \qquad \text{and} \\
	c_n &= d_n = \ip[\bigg]{\frac{n}{2}} - \ip[\bigg]{\frac{\log n}{\log 2}}.
\end{split}\end{align}
Although this is just one term in the summation, it turns out to be large enough for our purpose.
The asymptotic behavior of \( t_n \) is described as follows.
\begin{lem}
	\label{lem:equiv}
	With notation in \eqref{eq:term},
	\begin{equation*}
		\log t_n = \frac{\log^2 n}{\log 2}\left(1 + o(1)\right).
	\end{equation*}
\end{lem}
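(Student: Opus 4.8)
The plan is to take logarithms in \eqref{eq:term} with \( p = q = \tfrac12 \) and estimate each resulting summand to precision \( o(\log^2 n) \). Since \( a_n + b_n + c_n + d_n = n \), the factors \( p^{a_n + c_n}(1 - p)^{b_n + d_n} \) collapse to \( 2^{-n} \), so
\begin{equation*}
	\log t_n = \log\binom{n}{a_n\;b_n\;c_n\;d_n} - n\log 2 - a_n b_n\log 2 + d_n\log\bigl(1 - 2^{-a_n}\bigr) + c_n\log\bigl(1 - 2^{-b_n}\bigr).
\end{equation*}
From \eqref{eq:abcd} I would first record the coarse sizes \( a_n = \tfrac{\log n}{\log 2} + O(1) \), \( b_n = \tfrac{\log n}{\log 2} + O(1) \), and \( c_n = d_n = \tfrac n2 + O(\log n) \), the point being that the floor functions perturb each of \( a_n, b_n, c_n, d_n \) by only \( O(1) \), so such perturbations never reach order \( \log^2 n \).

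Two of the five summands are then easily dispatched. Because \( \tfrac n2 < 2^{a_n} \le n \) we have \( 2^{-a_n} \asymp 1/n \), and likewise \( 2^{-b_n} \asymp 1/n \); hence \( \log(1 - 2^{-a_n}) = O(1/n) \), so \( d_n\log(1 - 2^{-a_n}) = O(d_n/n) = O(1) \), and similarly \( c_n\log(1 - 2^{-b_n}) = O(1) \). The term \( a_n b_n\log 2 \) is already in final form, since \( a_n b_n = \bigl(\tfrac{\log n}{\log 2}\bigr)^2 + O(\log n) \) gives \( a_n b_n\log 2 = \tfrac{\log^2 n}{\log 2} + O(\log n) \).

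The substantive step is the multinomial coefficient \( n!/(a_n!\,b_n!\,c_n!\,d_n!) \). Applying Stirling in the form \( \log m! = m\log m - m + O(\log m) \) to \( n! \), \( c_n! \), and \( d_n! \) (legitimate since \( c_n, d_n \to \infty \)), bounding \( \log a_n! \) and \( \log b_n! \) by \( O(\log n\,\log\log n) = o(\log^2 n) \), and using \( \log c_n = \log d_n = \log n - \log 2 + o(1) \) together with \( n - c_n - d_n = a_n + b_n \), I expect the \( \log n \)-weighted terms to collapse to \( (n - c_n - d_n)\log n = (a_n + b_n)\log n \) and the remainder to reduce to \( (c_n + d_n)\log 2 \), giving
\begin{equation*}
	\log\binom{n}{a_n\;b_n\;c_n\;d_n} = (a_n + b_n)\log n + (c_n + d_n)\log 2 + o(\log^2 n).
\end{equation*}
Here \( (c_n + d_n)\log 2 = (n - a_n - b_n)\log 2 = n\log 2 + O(\log n) \) and \( (a_n + b_n)\log n = \tfrac{2\log^2 n}{\log 2} + O(\log n) \). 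Substituting back cancels \( -n\log 2 \) and leaves \( \log t_n = \tfrac{2\log^2 n}{\log 2} - \tfrac{\log^2 n}{\log 2} + o(\log^2 n) = \tfrac{\log^2 n}{\log 2}(1 + o(1)) \), as claimed.

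The one thing requiring care is arithmetic hygiene: every simplification introduces errors of size \( O(\log n) \) or \( O(\log n\,\log\log n) \), and one must consistently check that these are absorbed by the \( o(\log^2 n) \) slack — in particular that replacing \( a_n, b_n \) by \( \tfrac{\log n}{\log 2} \) and \( c_n, d_n \) by \( \tfrac n2 \) inside logarithms and products costs only such lower-order terms. Conceptually there is nothing beyond this bookkeeping, once one notices that the large-block part of the multinomial coefficient contributes essentially \( 2^{c_n + d_n} = 2^{\,n - a_n - b_n} \), which is exactly what offsets the \( 2^{-n} \) prefactor and isolates the \( n^{a_n + b_n}2^{-a_n b_n} \) behaviour responsible for the \( \tfrac{\log^2 n}{\log 2} \) main term.
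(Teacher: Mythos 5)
Your proposal is correct and follows essentially the same route as the paper, which likewise isolates a Stirling estimate of the multinomial coefficient (\cref{lem:mult}, giving \( n\log 2 + 2\log^2 n/\log 2 + o(\log^2 n) \), matching your \( (a_n+b_n)\log n + (c_n+d_n)\log 2 \) form) and a bounded-error estimate for \( \log(1-2^{-a_n})^{d_n}(1-2^{-b_n})^{c_n} \) (\cref{lem:bdd}) before combining them with the exactly cancelling \( -n\log 2 \) and the \( -a_nb_n\log 2 = -\log^2 n/\log 2 + O(\log n) \) terms. The one place where you should sharpen a stated estimate is \( \log c_n \): recording it only as \( \log n - \log 2 + o(1) \) is too weak once multiplied by \( c_n \sim n/2 \) (it would leave an \( o(n) \) error), so you need \( \log c_n = \log n - \log 2 + O(\log n/n) \), which does hold and is precisely the extra term the paper's computation carries.
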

To prove this asymptotic equivalence, we need some lemmas.
\begin{lem}
	\label{lem:mult}
	With notation in \eqref{eq:abcd},
	\begin{equation*}
		\log \binom{n}{a_n \; b_n \; c_n \; d_n} = n \log 2 + 2 \frac{\log^2 n}{\log 2} + o(\log^2 n).
	\end{equation*}
\end{lem}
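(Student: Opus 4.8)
The plan is to write
\[
\log\binom{n}{a_n\;b_n\;c_n\;d_n} = \log n! - \log a_n! - \log b_n! - \log c_n! - \log d_n!
\]
and estimate each factorial by Stirling's formula, keeping only the contributions that are not \( o(\log^2 n) \). The structural observation is that \( a_n \) and \( b_n \) are of order \( \log n \) whereas \( c_n = d_n \) are of order \( n \), so the two small factorials are negligible: from \( a_n, b_n = O(\log n) \) and the crude bound \( \log k! \le k\log k \) we get \( \log a_n! + \log b_n! = O(\log n\,\log\log n) = o(\log^2 n) \). Setting \( s_n = a_n + b_n \), one reads off from \eqref{eq:abcd} that \( s_n = 2\ip[\big]{\log n/\log 2} + 2\fp[\big]{n/2} = O(\log n) \) and that \( c_n = d_n \) with \( 2c_n = n - s_n \). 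Hence it suffices to show
\[
\log n! - 2\log c_n! = n\log 2 + 2\frac{\log^2 n}{\log 2} + o(\log^2 n).
\]

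For this I would use only the coarse form of Stirling's formula, \( \log k! = k\log k - k + O(\log k) \), which is amply accurate since the target error is \( o(\log^2 n) \) and \( c_n \le n \). Substituting and using \( 2c_n = n - s_n \) to cancel the terms linear in \( n \) (they leave behind only \( -s_n = O(\log n) \)) gives \( \log n! - 2\log c_n! = n\log n - 2c_n\log c_n + O(\log n) \). Now expand \( \log c_n = \log\frac{n - s_n}{2} = \log n - \log 2 + \log(1 - s_n/n) \). The only point needing care is the cross term \( (n - s_n)\log(1 - s_n/n) \): since \( s_n/n \to 0 \) it equals \( -s_n + o(1) \), which is \( O(\log n) \) and is absorbed. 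The surviving terms give \( 2c_n\log c_n = n\log n - s_n\log n - n\log 2 + O(\log n) \), so that \( \log n! - 2\log c_n! = n\log 2 + s_n\log n + O(\log n) \).

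It then remains to evaluate \( s_n\log n \). Since \( \fp[\big]{n/2} \le \frac12 \), the term \( 2\fp[\big]{n/2}\log n \) is \( O(\log n) \); and \( 2\ip[\big]{\log n/\log 2}\log n = \frac{2\log n}{\log 2}\log n - 2\fp[\big]{\log n/\log 2}\log n = 2\frac{\log^2 n}{\log 2} + O(\log n) \) because the fractional part is bounded. Hence \( s_n\log n = 2\frac{\log^2 n}{\log 2} + O(\log n) \), and combining this with the previous paragraph and adding back the \( o(\log^2 n) \) lost to \( \log a_n! \) and \( \log b_n! \) yields the claim.

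There is no essential obstacle here; the argument is elementary Stirling asymptotics and the real content is disciplined bookkeeping of error terms. The one spot to stay alert is recognizing that the coefficient \( 2/\log 2 \) of \( \log^2 n \) comes precisely from \( s_n\log n \) --- that is, from the fact that \( c_n \) falls short of \( n/2 \) by \( \ip[\big]{\log n/\log 2} \), so that shrinking the two large factorials from \( (n/2)! \) down to \( c_n! \) changes the logarithm by roughly \( 2\,\frac{\log^2 n}{\log 2} \) --- while every other stray quantity (the \( -s_n \), the logarithmic cross term, the fractional-part corrections, and \( \log a_n! + \log b_n! \)) is genuinely of lower order \( o(\log^2 n) \).
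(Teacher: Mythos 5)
Your proof is correct and follows essentially the same route as the paper: split the multinomial coefficient into factorials, discard \( \log a_n! \) and \( \log b_n! \) as \( o(\log^2 n) \), apply Stirling's formula to \( \log n! \) and \( \log c_n! = \log d_n! \), and extract the coefficient \( 2/\log 2 \) of \( \log^2 n \) from the Taylor expansion of \( \log c_n \) about \( \log(n/2) \). The only difference is organizational: you combine \( \log n! - 2\log c_n! \) and track \( O(\log n) \) errors through the quantity \( s_n \log n \), whereas the paper expands each \( \log c_n! \) separately to precision \( o(\log^2 n) \); the substance is identical.
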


\begin{proof}
	By the Stirling formula and the Taylor formula,
	\begin{align*}
		\log n!
		&= n \log n - n + o(\log^2 n), \\
		\log a_n!
		&= \log\left(\frac{\log n}{\log 2}  - \fp[\bigg]{\frac{\log n}{\log 2}} \! \right)! = o(\log^2 n), \\
		\log b_n!
		&= \log\left(\frac{\log n}{\log 2}  - \fp[\bigg]{\frac{\log n}{\log 2}} + 2\fp[\bigg]{\frac{n}{2}} \! \right)! = o(\log^2 n), \qquad \text{and} \\
		\log c_n!
		&= \log d_n!
		= \log\left(\frac{n}{2} - \fp[\bigg]{\frac{n}{2}} - \frac{\log n}{\log 2}  + \fp[\bigg]{\frac{\log n}{\log 2}} \! \right)! \\
		&= \log\left(\frac{n}{2} - \frac{\log n}{\log 2}  + o(\log n)\right)! \\
		&= \left(\frac{n}{2} - \frac{\log n}{\log 2}  + o(\log n)\right) \log\left(\frac{n}{2} - \frac{\log n}{\log 2}  + o(\log n)\right) \\
		&\qquad - \left(\frac{n}{2} - \frac{\log n}{\log 2}  + o(\log n)\right) + o(\log^2 n) \\
		&= \left(\frac{n}{2} - \frac{\log n}{\log 2}\right) \log\left(\frac{n}{2} - \frac{\log n}{\log 2}  + o(\log n)\right) - \frac{n}{2} + o(\log^2 n) \\
		&= \left(\frac{n}{2} - \frac{\log n}{\log 2}\right) \! \left(\log n -\log 2 - \frac{2}{n}\frac{\log n}{\log 2} + o\left(\frac{\log^2 n}{n}\right) \! \right)
		        - \frac{n}{2} + o(\log^2 n) \\
		&= \frac{1}{2}n\log n - \frac{1}{2}(1 + \log 2)n - \frac{\log^2 n}{\log 2} + o(\log^2 n).
	\end{align*}
	Therefore
	\begin{align*}
		&\log \binom{n}{a_n \; b_n \; c_n \; d_n} \\
		&\qquad\quad = n \log n - n - 2\left(\frac{1}{2}n\log n - \frac{1}{2}(1 + \log 2)n - \frac{\log^2 n}{\log 2}\right) + o(\log^2 n) \\
		&\qquad\quad = n \log 2 + 2 \frac{\log^2 n}{\log 2} + o(\log^2 n).
		\qedhere
	\end{align*}
\end{proof}

\begin{lem}
	\label{lem:bdd}
	With notation in \eqref{eq:abcd},
	\begin{equation*}
		\size[\big]{\log (1 - 2^{-a_n})^{d_n}(1 - 2^{-b_n})^{c_n}} < 2.
	\end{equation*}
\end{lem}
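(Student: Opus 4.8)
The plan is to strip the absolute value, reduce the resulting sum of two terms to twice a single term, and then bound that term by~\( 1 \) using a convexity-type estimate for \( -\log(1-x) \) together with a crude size estimate for \( d_n \). Concretely: from \eqref{eq:abcd} one has \( c_n = d_n \ge 0 \) and \( b_n = a_n + 2\fp{n/2} \ge a_n \), so for \( n \ge 2 \) (hence \( a_n \ge 1 \)) we get \( 0 < 2^{-b_n} \le 2^{-a_n} \le \tfrac12 \), the factors \( 1 - 2^{-a_n} \) and \( 1 - 2^{-b_n} \) lie in \( (0,1] \), and both logarithms are non-positive. Therefore
\begin{equation*}
	\size[\big]{\log (1 - 2^{-a_n})^{d_n}(1 - 2^{-b_n})^{c_n}}
	= d_n\bigl(-\log(1 - 2^{-a_n})\bigr) + c_n\bigl(-\log(1 - 2^{-b_n})\bigr).
\end{equation*}
Since \( -\log(1-x) \) is increasing on \( [0,1) \), the inequality \( b_n \ge a_n \) gives \( -\log(1 - 2^{-b_n}) \le -\log(1 - 2^{-a_n}) \), and with \( c_n = d_n \) the right-hand side is at most \( 2\,d_n\bigl(-\log(1 - 2^{-a_n})\bigr) \). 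So it suffices to prove \( d_n\bigl(-\log(1 - 2^{-a_n})\bigr) < 1 \).

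For this I would combine two elementary facts. The first is \( -\log(1-x) \le x/(1-x) \) for \( 0 \le x < 1 \), obtained by integrating \( 1/(1-t) \le 1/(1-x) \) over \( [0,x] \) and strict once \( x > 0 \); it yields \( d_n\bigl(-\log(1 - 2^{-a_n})\bigr) \le d_n 2^{-a_n}/(1 - 2^{-a_n}) \). The second is a size bound on \( d_n \): since \( a_n = \ip{\log n/\log 2} \) we have \( n < 2^{a_n+1} \), hence \( \ip{n/2} \le 2^{a_n} - 1 \), so \( d_n = \ip{n/2} - a_n \le 2^{a_n} - 1 \). Then \( d_n 2^{-a_n} \le 1 - 2^{-a_n} \), so the quotient is \( \le 1 \), giving \( d_n\bigl(-\log(1 - 2^{-a_n})\bigr) < 1 \) (strict when \( d_n \ge 1 \) by strictness of the first inequality, and trivially \( 0 < 1 \) when \( d_n = 0 \)). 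This settles \( n \ge 2 \); for \( n = 1 \) one has \( c_1 = d_1 = 0 \) and the expression equals \( \log 1 = 0 \).

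I do not expect a real obstacle, but the point that needs care is that the constant \( 2 \) is essentially sharp: along \( n = 2^{k} - 2 \) one has \( \fp{n/2} = 0 \), so \( b_n = a_n \), while \( d_n 2^{-a_n} \to 1 \), so the left-hand side tends to \( 2 \) from below. Consequently one cannot afford a wasteful estimate; in particular, using only \( d_n < 2^{a_n} \) (without the integer improvement \( d_n \le 2^{a_n} - 1 \)) would give merely \( 2/(1 - 2^{-a_n}) > 2 \), and replacing \( -\log(1-x) \le x/(1-x) \) by \( -\log(1-x) \le 2x \) on \( 2^{-a_n} \le 2/n \) together with \( d_n \le n/2 \) is also too lossy. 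The bound \( d_n \le 2^{a_n} - 1 \) must be paired with \( -\log(1-x) \le x/(1-x) \) in exactly this form so that the \( (1 - 2^{-a_n}) \)-factors cancel.
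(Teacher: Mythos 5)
Your proof is correct, and it takes a genuinely different --- and in fact sharper --- route than the paper's. The paper bounds \( c_n = d_n \le n/2 \) and \( 1 - 2^{-b_n} \ge 1 - 2^{-a_n} > 1 - 2/n \), reducing the whole expression to \( -n\log(1 - 2/n) \), and then asserts \( -n\log(1 - 2/n) \le 2 \). That last step is actually false: since \( -\log(1-x) > x \) for \( 0 < x < 1 \), one has \( -n\log(1 - 2/n) > 2 \) for every \( n > 2 \), so the paper's chain only yields a constant slightly larger than \( 2 \), say \( 2 + O(1/n) \) (which would still suffice for \cref{lem:equiv}, where only boundedness by an absolute constant matters, but does not give the stated inequality). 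Your closing remark about lossy estimates is therefore exactly on point: the pairing of \( d_n \le n/2 \) with \( 2^{-a_n} < 2/n \) is precisely the ``too lossy'' combination, and your finer inequalities \( d_n \le \ip{n/2} \le 2^{a_n} - 1 \) together with \( -\log(1-x) \le x/(1-x) \) are what is needed to get the constant \( 2 \) on the nose, with strictness supplied by \( -\log(1-x) < x/(1-x) \) for \( x > 0 \) when \( d_n \ge 1 \) and trivially when \( d_n = 0 \). The only cosmetic point is that for \( n = 1 \) the factor \( (1 - 2^{-a_1})^{d_1} = 0^0 \) requires the convention \( 0^0 = 1 \); the paper sidesteps this by assuming \( n > 2 \), and nothing in the application is lost by doing the same.
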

\begin{proof}
	We may assume that \( n > 2 \).
	Note that \( c_n = d_n \le n/2 \) and
	\begin{align*}
		1 - 2^{-b_n}
		\ge 1 - 2^{-a_n}
		= 1 - 2^{-\frac{\log n}{\log 2} + \fp{\frac{\log n}{\log 2}}}
		= 1 - \frac{2^{\fp{\frac{\log n}{\log 2}}}}{n}
		> 1 - \frac{2}{n}.
	\end{align*}
	Hence
	\begin{align*}
		&\size[\big]{\log (1 - 2^{-a_n})^{d_n}(1 - 2^{-b_n})^{c_n}} \\
		&\qquad\qquad= -d_n \log (1 - 2^{-a_n}) - c_n \log (1 - 2^{-b_n})
		< -n \log \left(1 - \frac{2}{n}\right)
		\le 2.
		\qedhere
	\end{align*}
\end{proof}

\begin{proof}[Proof of \cref{lem:equiv}]
	By \cref{lem:mult,lem:bdd},
	\begin{align*}
		\log t_n
		&= \log \binom{n}{a_n \; b_n \; c_n \; d_n}
		         - \left(\frac{n}{2} - \fp[\bigg]{\frac{n}{2}} \! \right) \log 2
		         - \left(\frac{n}{2} + \fp[\bigg]{\frac{n}{2}} \! \right) \log 2 \\
		& \qquad - \left(\frac{\log n}{\log 2} - \fp[\bigg]{\frac{\log n}{\log 2}} \! \right) \! \left(\frac{\log n}{\log 2} - \fp[\bigg]{\frac{\log n}{\log 2}} + 2\fp[\bigg]{\frac{n}{2}} \! \right) \log 2 \\
		& \qquad + \log (1 - 2^{-a_n})^{d_n}(1 - 2^{-b_n})^{c_n} \\
		&= n \log 2 + 2 \frac{\log^2 n}{\log 2} - \frac{n}{2} \log 2 - \frac{n}{2} \log 2 - \frac{\log^2 n}{\log 2} + o(\log^2 n) \\
		&= \frac{\log^2 n}{\log 2} + o(\log^2 n) = \frac{\log^2 n}{\log 2}\left(1 + o(1)\right).
		\qedhere
	\end{align*}
\end{proof}

\begin{proof}[Proof of \cref{thm:main}]
	By \cref{prop:avg}, we have
	\( \E{\size{\BB{\KK_n}}} \ge t_n \).
	Set \( \varepsilon = 1 - \log 2 = 0.306\dotsm \, \).
	It follows from \cref{lem:equiv} that
	\begin{equation*}
		\log \E{\size{\BB{\KK_n}}} \ge \log t_n > \frac{\log^2 n}{\log 2}(1 - \varepsilon) = \log^2 n
	\end{equation*}
	for sufficiently large \( n \),
	which proves the theorem.
\end{proof}

\begin{table}[hbtp]
	\centering
	\begin{tabular}{lllllllllll}
		\toprule
		\( n \)        &
		\( 10^1 \)     &
		\( 10^2 \)     &
		\( 10^3 \)     &
		\( 10^4 \)     &
		\( 10^5 \)     &
		\( 10^6 \)     &
		\( 10^7 \)     &
		\( 10^8 \)     &
		\( 10^9 \)     &
		\( 10^{10} \)  \\
		\( \delta_n \) &
		\( 1.467 \)    &
		\( 0.860 \)    &
		\( 0.646 \)    &
		\( 0.566 \)    &
		\( 0.477 \)    &
		\( 0.416 \)    &
		\( 0.386 \)    &
		\( 0.347 \)    &
		\( 0.316 \)    &
		\( 0.299 \)    \\
		\bottomrule    \\
	\end{tabular}
	\caption{How large \( n \) should be for the theorem?}
	\label{tbl:diff}
\end{table}

In the end, we make a short comment on how large \( n \) should be for the theorem.
\cref{tbl:diff} shows the rounded values of
\begin{equation*}
	\delta_n = \size[\bigg]{\frac{\log t_n}{\log^2 n/\log 2} - 1}
\end{equation*}
for \( n = 10^1, \dotsc, 10^{10} \).
The proof indicates that \( n > 10^{10} \) would be sufficient for the theorem.

\section{Conclusions}
In this article, we addressed the problem of how large the average number of concepts is.
To this end, we introduced the distribution \( \dK_{n, p, q} \) for random contexts
and presented an exact formula for the average number \( \E{\size{\BB{\KK}}} \) of concepts of
a random context \( \KK \dist \dK_{n, p, q} \).
To establish a superpolynomial asymptotic lower bound,
random contexts with constant probabilities \( p = q = 1/2 \) were studied in detail.
For a sequence of random contexts \( (\KK_n) \) with \( \KK_n \dist \dK_{n, \frac{1}{2}, \frac{1}{2}} \),
we proved that
\( \E{\size{\BB{\KK_n}}} \gg n^{\log n} \).

\section*{Acknowledgments}
The author would like to thank Ken'ichi Kuga for his understanding of the preparation of this article.
The author would also like to thank Manabu Hagiwara for conducting several seminars on FCA and thank the participants:
Yuki Kondo, Hokuto Takahashi, and Hayato Yamamura.

\end{document}